\setlist[itemize,enumerate]{leftmargin=*}
\DeclareMathOperator*{\argmax}{\text{arg\,max}}
\DeclareMathOperator*{\argtopk}{\text{arg\,top-$k$}}
\newtheorem{theorem}{Theorem}
\newtheorem{proposition}[theorem]{Proposition}
\title{Predicting Attention Sparsity in Transformers}
\author{ \\
        Address line \\ ... \\ Address line}
\author{
Marcos Treviso$^{1,2}$ \quad 
António Góis$^{5}$\thanks{\, Work done at Instituto de Telecomunicações. Correspondence to \texttt{marcos.treviso@tecnico.ulisboa.pt}}  \quad 
Patrick Fernandes$^{1,2,3}$ \\
{\bf Erick Fonseca}$^{6*}$ \quad 
{\bf André F. T. Martins}$^{1,2,4}$ \\
\normalsize $^{1}$Instituto de Telecomunicações, Lisbon, Portugal \\ 
\normalsize $^{2}$Instituto Superior Técnico \& LUMLIS (Lisbon ELLIS Unit), Lisbon, Portugal \\ 
\normalsize $^{3}$Language Technologies Institute, Carnegie Mellon University, Pittsburgh, PA \\
\normalsize $^{4}$Unbabel, Lisbon, Portugal \\ 
\normalsize $^{5}$Mila, Université de Montréal, Canada \\ 
\normalsize $^{6}$Kaufland e-commerce, Cologne, Germany\\
}
\begin{document}
\maketitle
\begin{abstract}
Transformers' quadratic complexity with respect to the input sequence length has motivated a body of work on efficient sparse \emph{approximations} to softmax. 
An alternative path, used by entmax transformers, consists of having built-in \emph{exact} sparse attention; however this approach still requires quadratic computation. 
In this paper, we propose \textit{Sparsefinder}, a simple model trained to \emph{identify} the sparsity pattern of entmax attention before computing it.  
We experiment with three variants of our method, based on distances, quantization, and clustering, on two tasks: machine translation (attention in the decoder) and masked language modeling (encoder-only). 
Our work provides a new angle to study model efficiency by doing extensive analysis of the tradeoff between the sparsity and recall of the predicted attention graph. This allows for detailed comparison between different models along their Pareto curves, important to guide future benchmarks for sparse attention models.
\end{abstract}

\section{Introduction}

Transformer-based architectures have achieved remarkable results in many NLP tasks \citep{vaswani2017attention,devlin-etal-2019-bert,brown2020language}. 
However, they also bring important computational and environmental concerns, caused by their quadratic time and memory computation requirements with respect to the sequence length. 
This comes in addition to the difficulty of interpreting their inner workings, caused by their overparametrization and large number of attention heads. 

There is a large body of work developing ways to ``sparsify'' the computation in transformers, either by imposing local or fixed attention patterns \citep{child2019generating,tay2020sparse,zaheer2020bigbird}, by applying low-rank kernel approximations to softmax \citep{wang2020linformer,choromanski2021rethinking}, or by learning which queries and keys should be grouped together \cite{kitaev-etal-2019-multilingual,DarasSMYRF2020,roy-etal-2021-efficient,wang-etal-2021-cluster}. 
Most of the existing work seeks to \textit{approximate} softmax-based attention by ignoring the (predicted) tails of the distribution, which can lead to performance degradation. 
An exception is transformers with \textbf{entmax-based sparse attention} \citep{correia-etal-2019-adaptively}, a content-based approach which is natively sparse -- this approach has the ability to let each attention head learn from data how sparse it should be, eliminating the need for heuristics or approximations. The disadvantage of this approach is that it still requires a quadratic computation to determine the sparsity pattern, failing to take computational advantage of attention sparsity. 

\begin{figure}[t]
    \centering
    \includegraphics[width=1\columnwidth]{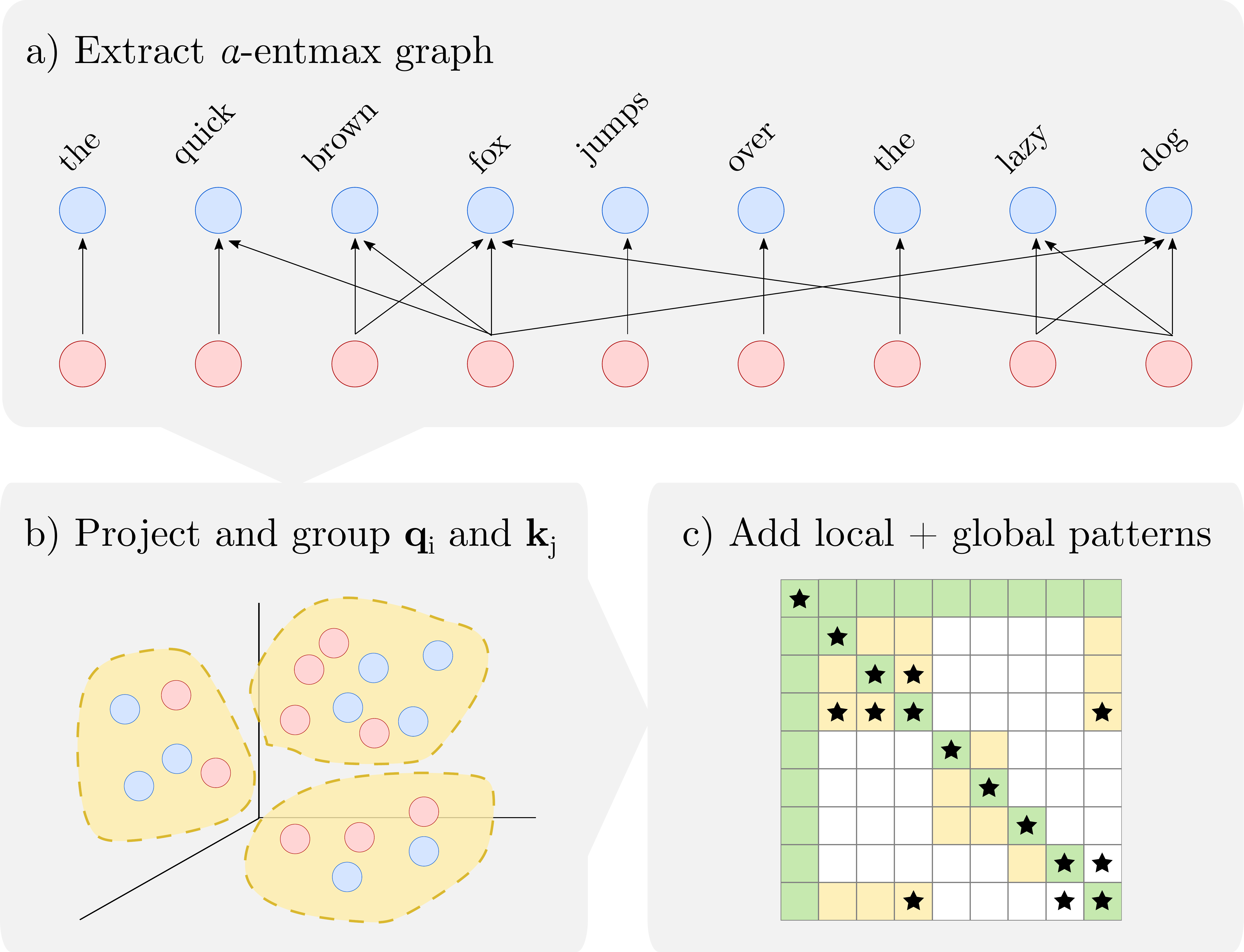}
    \caption{
    (a) Extract sparse attention graphs from a pretrained $\alpha$-entmax transformer; 
    (b) Project query and key vectors to a smaller and appropriated space such that similar points are likely to fall in the same vicinity;
    (c) Additionally, we can combine window and global patterns (green blocks) with the learned pattern (yellow blocks) to increase the recall in recovering ground-truth edges from the sparse graph at the top (starred blocks).
    }
    \label{fig:sparsefinder_overview}
\end{figure}

In this paper, we propose \textbf{Sparsefinder}, which fills the gap above by making entmax attention more efficient (\S\ref{sec:sparsefinder}). Namely, we investigate three methods to predict the sparsity pattern of entmax without having to compute it: one based on metric learning, which is still quadratic but with a better constant (\S\ref{sec:distance-based}), one based on quantization (\S\ref{sec:quantization}), and another based on clustering (\S\ref{sec:clustering}). In all cases, the predictors are trained offline on ground-truth sparse attention graphs from an entmax transformer, seeking high recall in their predicted edges without compromising the total amount of sparsity. 
Figure~\ref{fig:sparsefinder_overview} illustrates our method.

More precisely, to evaluate the effectiveness of our method across different scenarios, we perform experiments on two NLP tasks, encompassing encoder-only and decoder-only configurations: machine translation (MT, \S\ref{sec:experiments_mt}) and masked language modeling (MLM, \S\ref{sec:experiments_qa}), 
doing an extensive analysis of the tradeoff between sparsity and recall (i.e., performance on the attention graph approximation), and sparsity and accuracy (performance on downstream tasks). 
We compare our method with four alternative solutions based on efficient transformers: Longformer \citep{Beltagy2020Longformer}, Bigbird \citep{zaheer2020bigbird}, Reformer \citep{Kitaev2020Reformer}, and Routing Transformer \citep{roy-etal-2021-efficient}, along their entire Pareto curves. 
We complement these experiments by analyzing qualitatively what is selected by the different attention heads at the several layers and represented in different clusters/buckets. %
Overall, our contributions are:%
\footnote{\url{https://github.com/deep-spin/sparsefinder}}

\begin{itemize}
    \item We propose a simple method that exploits learnable sparsity patterns to efficiently compute multi-head attention (\S\ref{sec:sparsefinder}).
    
    \item We do an extensive analysis of the tradeoff between sparsity and recall, and sparsity and accuracy in MT (\S\ref{sec:experiments_mt}) and MLM (\S\ref{sec:experiments_qa}), showing that there is clear room for improvement in the design of efficient transformers.

    \item We qualitatively analyze what is selected by the different attention heads at various layers and represented in different clusters/buckets.
\end{itemize}

\section{Related Work} \label{sec:related_work}

\paragraph{Interpreting multi-head attention.} \label{parag:interpreting_multihead}
Several works analyze the functionalities learned by different attention heads, such as positional and local context patterns \citep{raganato-tiedemann-2018-analysis,voita-etal-2019-analyzing}.  
Building upon prior work on sparse attention mechanisms \citep{peters-etal-2019-sparse},  \citet{correia-etal-2019-adaptively} constrain the attention heads to induce sparse selections individually for each head, bringing interpretability without post-hoc manipulation. Related approaches include the explicit sparse transformer \citep{zhao2019explicit} and rectified linear attention \citep{zhang2021sparse}, which drops the  normalization constraint. 
\citet{raganato-etal-2020-fixed} show that it is possible to fix attention patterns based on previously known behavior (e.g. focusing on previous token) while improving translation quality. However, a procedure that exploits learnable sparsity patterns to accelerate multi-head attention is still missing.  

\paragraph{Low-rank softmax approximations.} \label{parag:low_rank_softmax}

Methods based on low-rank approximation to the softmax such as Linearized Attention  \citep{katharopoulos_et_al_2020}, Linformer \citep{wang2020linformer}, and Performer \citep{choromanski2021rethinking} reduce both speed and memory complexity of the attention mechanism from quadratic to linear, but make interpretability more challenging because the scores are not computed explicitly.
On the other hand, methods that focus on %
inducing sparse patterns provide interpretable alignments and also have performance gains in terms of speed and memory.

\paragraph{Fixed attention patterns.} \label{parag:fixed_attention}

Among fixed pattern methods, Sparse Transformer~\citep{child2019generating} and LongFormer~\citep{Beltagy2020Longformer} attend to fixed positions by using strided/dilated sliding windows. BigBird uses random and two fixed patterns (global and window) to build a block sparse matrix representation~\citep{zaheer2020bigbird}, taking advantage of block matrix operations to accelerate GPU computations.
In contrast, we replace the random pattern with a learned pattern that mimics pretrained $\alpha$-entmax sparse attention graphs.

\paragraph{Learnable attention patterns.} \label{parag:learnable_attention}

Learnable pattern methods usually have to deal with assignment decisions within the multi-head attention mechanism. Clustered Attention~\citep{vyas_et_al_2020} groups query tokens into clusters and computes dot-products only with centroids. Reformer~\citep{Kitaev2020Reformer} and SMYRF~\citep{DarasSMYRF2020} use locality-sensitive hashing to efficiently group tokens in buckets. More similar to our work, Routing Transformer~\citep{roy-etal-2021-efficient} and Cluster-Former~\citep{wang-etal-2021-cluster} cluster queries and keys with online \textit{k}-means and compute dot-products over the top-\textit{k} cluster points. Some queries and keys are discarded due to this filtering, which affects the overall recall of the method (as we show in \S\ref{sec:experiments_mt} and \S\ref{sec:experiments_qa}). The ability of Routing Transformer to benefit from contextual information has been analyzed by \citet{sun-etal-2021-long}. In contrast, Sparsefinder learns to cluster based on sparsity patterns from attention graphs generated by $\alpha$-entmax.

\section{Background}

\subsection{Transformers}

The main component of transformers is the \textbf{multi-head attention} mechanism  
\citep{vaswani2017attention}. 
Given as input a matrix $\mathbf{Q} \in \mathbb{R}^{n \times d}$ containing $d$-dimensional representations for $n$ queries, and matrices $\mathbf{K},\mathbf{V} \in \mathbb{R}^{m \times d}$ for $m$ keys and values, the \textit{scaled dot-product
attention} at a single head is computed  in the following way:
\begin{equation}\label{eq:dotproduct-attention}
    \text{att}(\mathbf{Q}, \mathbf{K}, \mathbf{V}) = 
    \pi
    \underbrace{\Bigg(
        \frac{\mathbf{Q}\mathbf{K}^\top}{\sqrt{d}}
    \Bigg)}_{\mathbf{Z} \in \mathbb{R}^{n \times m}} 
    \mathbf{V} \in \mathbb{R}^{n \times d}.
\end{equation}
The $\pi$ transformation maps rows to distributions, with softmax being the most common choice, 
$\pi(\mathbf{Z})_{ij} = \mathrm{softmax}(\mathbf{z}_i)_j$. 
Multi-head attention is computed by evoking Eq.~\ref{eq:dotproduct-attention} in parallel for each head $h$: 
\begin{equation}\label{eq:multihead-attention} \nonumber
    \text{head}_h(\mathbf{Q}, \mathbf{K}, \mathbf{V}) = \text{att}(\mathbf{Q}\mathbf{W}^Q_h, \mathbf{K}\mathbf{W}^K_h, \mathbf{V}\mathbf{W}^V_h),
\end{equation}
where $\mathbf{W}^Q_h$, $\mathbf{W}^K_h$, $\mathbf{W}^V_h$ are learned linear transformations. 
This way, heads are able to learn specialized phenomena. According to the nature of the input, transformers have three types of multi-head attention mechanism: encoder self-attention (source-to-source), decoder self-attention (target-to-target), and decoder cross-attention (target-to-source). 
While there are no restrictions to which elements can be attended to in the encoder, elements in position $j > i$ in the decoder self-attention are masked  at timestep $i$ (``causal mask'').

\subsection{Extmax Transformers and Learned Sparsity} 

The main computational bottleneck in transformers is the matrix multiplication $\mathbf{Q}\mathbf{K}^\top$ in Eq.~\ref{eq:dotproduct-attention}, which costs $\mathcal{O}(nmd)$ time and can be impractical when $n$ and $m$ are large. 
Many approaches, discussed in \S\ref{sec:related_work}, approximate Eq.~\ref{eq:dotproduct-attention} by ignoring entries far from the main diagonal or computing only some blocks of this matrix, with various heuristics. 
By doing so, the result will be an \textit{approximation} of the softmax attention in Eq.~\ref{eq:dotproduct-attention}. 
This is because the original softmax-based attention is {\it dense}, i.e., it puts \textit{some} probability mass on all tokens -- not only a computational disadvantage, but also  making interpretation harder, as it has been observed that only a small fraction of attention heads capture relevant information \citep{voita-etal-2019-analyzing}. 

An alternative to softmax is the {\bf $\alpha$-entmax transformation}  \citep{peters-etal-2019-sparse,correia-etal-2019-adaptively}, which leads to sparse patterns directly, \textit{without any approximation}: 
\begin{equation}\label{eq:solution_entmax}
    \alpha\text{-entmax}(\mathbf{z}) = [(\alpha - 1)\mathbf{z} - \tau(\mathbf{z})\mathbf{1}]_{+}^{\nicefrac{1}{\alpha-1}},
\end{equation}
where $[\cdot]_{+}$ is the positive part (ReLU) function, and $\tau: \mathbb{R}^n \rightarrow \mathbb{R}$ is a normalizing function satisfying $\sum_j [(\alpha-1)z_j - \tau(\mathbf{z})]_{+}^{\nicefrac{1}{\alpha-1}} = 1$ for any $\mathbf{z}$. That is, entries with score $z_j \leq \nicefrac{\tau(\mathbf{z})}{\alpha-1}$ get exactly zero probability. 
In the limit $\alpha \rightarrow 1$, $\alpha$-entmax recovers the softmax function, while for any value of $\alpha>1$ this transformation can return  sparse probability vectors (as the value of $\alpha$ increases, the induced probability distribution becomes more sparse). When $\alpha=2$, we recover sparsemax \citep{martins2016softmax}. 
In this paper, we use $\alpha=1.5$, which works well in practice and has a specialized fast algorithm %
\citep{peters-etal-2019-sparse}.

Although sparse attention improves interpretability and head diversity when compared to dense alternatives %
\citep{correia-etal-2019-adaptively}, %
the learned sparsity patterns cannot be trivially exploited to reduce the quadratic burden of self-attention, since we still need to compute dot-products between all queries and keys ($\mathbf{Q}\mathbf{K}^\top$) before applying the $\alpha\text{-entmax}$ transformation. In the next section (\S\ref{sec:sparsefinder}), we propose a simple method that learns to \textit{identify} these sparsity patterns beforehand, avoiding the full matrix multiplication. %

\section{Sparsefinder} \label{sec:sparsefinder}

We now propose our method to extract sparse attention graphs and learn where to attend by exploiting a special property of $\alpha$-entmax: \emph{sparse-consistency} (\S\ref{sec:sparse_consistency}). We design three variants of Sparsefinder to that end, based on metric learning (\S\ref{sec:distance-based}), quantization (\S\ref{sec:quantization}), and clustering (\S\ref{sec:clustering}). %

\subsection{Attention graph and sparse-consistency}\label{sec:sparse_consistency}

For each attention head $h$, we define its 
\textbf{attention graph} as 
$\mathcal{G}_h = \{(\mathbf{q}_i, \mathbf{k}_j) \mid p_{i,j} > 0\}$, a bipartite graph connecting query and key pairs $\mathbf{q}_i, \mathbf{k}_j \in \mathbb{R}^d$ for which the $\alpha\text{-entmax}$ probability $p_{i,j}$ is nonzero. An example of attention graph is shown in Figure~\ref{fig:sparsefinder_overview}. 
We denote by $|\mathcal{G}_h|$ the total size of an attention graph, i.e., its number of edges. 
With $\alpha\text{-entmax}$ with $\alpha=1.5$ we typically have $|\mathcal{G}_h| \ll nm$. 
In contrast, softmax attention always leads to a complete graph, $|\mathcal{G}_h| = nm$. 

\paragraph{Problem statement.} 
Our goal is to build a model -- which we call \textit{Sparsefinder} -- that predicts $\hat{\mathcal{G}}_h \approx \mathcal{G}_h$ without having to perform all pairwise comparisons between queries and keys. 
This enables the complexity of evaluating  Eq.~\ref{eq:dotproduct-attention} to be reduced from $\mathcal{O}(nmd)$ to $\mathcal{O}(|\hat{\mathcal{G}}_h|d)$,  effectively taking advantage of the sparsity of $\alpha$-entmax. 
In order to learn such a model, 
we first extract a dataset of sparse attention graphs $\{\mathcal{G}_h\}$ from a pretrained entmax-based transformer, which acts as a teacher. 
Then, the student  learns where to pay attention based on this information. 
This procedure is motivated by the following \textbf{sparse-consistency} property of $\alpha$-entmax: 

\begin{proposition}[Sparse-consistency property] \label{prop:sparse_consistency_property}
Let $\mathbf{b}$ be a binary vector such that $b_j = 1$ if $p_j^\star > 0$, and $b_j = 0$ otherwise. 
For any binary mask vector $\mathbf{m}$ ``dominated'' by $\mathbf{b}$ (i.e.  $\mathbf{m} \odot \mathbf{b} = \mathbf{b}$), we have 
\begin{equation}
    \alpha\text{-entmax}(\mathbf{z}) = \alpha\text{-entmax}(\mathbf{z}|_{\mathbf{m}}),
\end{equation}
where $z_j|_{\mathbf{m}} = z_j$ if $m_j=1$ and $-\infty$ if $m_j=0$. 
\end{proposition}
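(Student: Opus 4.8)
The plan is to prove the identity through the variational characterization of $\alpha$-entmax rather than by manipulating the closed form in Eq.~\ref{eq:solution_entmax} directly. Recall that $\alpha\text{-entmax}(\mathbf{z})$ is the unique maximizer over the probability simplex $\Delta = \{\mathbf{p} \geq \mathbf{0} : \sum_j p_j = 1\}$ of the strictly concave objective $f(\mathbf{p}) = \langle \mathbf{p}, \mathbf{z}\rangle + H_\alpha(\mathbf{p})$, where $H_\alpha(\mathbf{p}) = \frac{1}{\alpha(\alpha-1)}\left(1 - \sum_j p_j^\alpha\right)$ is the Tsallis $\alpha$-entropy; strict concavity for $\alpha > 1$ guarantees the maximizer is unique. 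The first step is to make precise the meaning of masking: setting $z_j|_{\mathbf{m}} = -\infty$ for $m_j = 0$ (with the convention $0 \cdot (-\infty) = 0$) is equivalent to maximizing $f$ over the face $\Delta_{\mathbf{m}} = \{\mathbf{p} \in \Delta : p_j = 0 \text{ whenever } m_j = 0\}$, because any $\mathbf{p}$ placing positive mass on a masked coordinate drives the linear term to $-\infty$. Hence $\alpha\text{-entmax}(\mathbf{z}|_{\mathbf{m}}) = \argmax_{\mathbf{p} \in \Delta_{\mathbf{m}}} f(\mathbf{p})$.

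The key step is to observe that the unmasked solution $\mathbf{p}^\star = \alpha\text{-entmax}(\mathbf{z})$ already lies in $\Delta_{\mathbf{m}}$. By definition of $\mathbf{b}$ we have $p_j^\star = 0$ whenever $b_j = 0$, and the domination hypothesis $\mathbf{m} \odot \mathbf{b} = \mathbf{b}$ forces $m_j = 1$ on the whole support, so every masked coordinate ($m_j = 0$) satisfies $b_j = 0$ and therefore $p_j^\star = 0$. Thus $\mathbf{p}^\star$ is feasible for the restricted problem.

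I would then close the argument with the elementary principle that a maximizer over a superset which happens to be feasible in a subset is also the maximizer over the subset: since $\Delta_{\mathbf{m}} \subseteq \Delta$ and $\mathbf{p}^\star$ maximizes $f$ over all of $\Delta$, it a fortiori maximizes $f$ over $\Delta_{\mathbf{m}}$, and strict concavity of $f$ restricted to the face makes it the unique such maximizer. Therefore $\alpha\text{-entmax}(\mathbf{z}|_{\mathbf{m}}) = \mathbf{p}^\star = \alpha\text{-entmax}(\mathbf{z})$, which is the claim.

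The only delicate points are bookkeeping rather than genuine obstacles. First, the equivalence between the $-\infty$ substitution and the face restriction must be stated with the $0 \cdot (-\infty) = 0$ convention made explicit, which is precisely how masking is defined for attention. Second, both the global uniqueness on $\Delta$ and the uniqueness on the face $\Delta_{\mathbf{m}}$ rely on the strict concavity of the Tsallis regularizer for $\alpha > 1$. If a self-contained argument avoiding the variational viewpoint were preferred, I would instead verify directly from Eq.~\ref{eq:solution_entmax} that the threshold is preserved: plugging $\tilde\tau = \tau(\mathbf{z})$ into the normalization identity for $\mathbf{z}|_{\mathbf{m}}$ still yields a sum of one, since the dropped coordinates contributed zero to begin with, and strict monotonicity of the map $\tau \mapsto \sum_j [(\alpha-1)z_j - \tau]_{+}^{\nicefrac{1}{\alpha-1}}$ gives $\tau(\mathbf{z}|_{\mathbf{m}}) = \tau(\mathbf{z})$, whence the two entmax vectors agree coordinatewise.
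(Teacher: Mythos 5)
Your proof is correct, and it takes a genuinely different route from the paper's. The paper works directly with the closed form $\alpha\text{-entmax}(\mathbf{z}) = [(\alpha-1)\mathbf{z} - \tau(\mathbf{z})\mathbf{1}]_{+}^{\nicefrac{1}{\alpha-1}}$: it observes that masking only lowers scores on coordinates already at or below the threshold, verifies that $\tau(\mathbf{z})$ still satisfies the normalization identity for the masked scores (the dropped terms contributed zero), and concludes $\tau(\mathbf{z}|_{\mathbf{m}}) = \tau(\mathbf{z})$ so the two outputs coincide coordinatewise --- this is exactly the ``self-contained'' alternative you sketch in your last paragraph. Your main argument instead uses the variational characterization: masking is restriction to the face $\Delta_{\mathbf{m}}$ of the simplex, the unrestricted maximizer $\mathbf{p}^\star$ already lies on that face because domination forces every masked coordinate outside the support, and a global maximizer that is feasible for a subproblem solves the subproblem, with uniqueness from strict concavity of $H_\alpha$ for $\alpha > 1$. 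Each step checks out, including the feasibility step ($m_j = 0$ implies $b_j = m_j b_j = 0$ implies $p_j^\star = 0$) and the $0\cdot(-\infty)=0$ bookkeeping you flag. What your route buys is generality and independence from the closed form: the same three-line argument works for any regularized argmax over the simplex with a strictly concave regularizer, not just Tsallis entropies. What the paper's route buys is the slightly stronger, more constructive fact that the threshold $\tau$ itself is unchanged by masking, which falls out explicitly there but is only implicit in yours.
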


\begin{proof} See \S\ref{sec:sparse_attention_supp} in the supplemental material.
\end{proof}

This property ensures that, if $\hat{\mathcal{G}}_h$ is such that $\mathcal{G}_h \subseteq \hat{\mathcal{G}}_h$, then we obtain \textit{exactly} the same result as with the original entmax attention. 
Therefore, we are interested in having high recall,
\begin{equation}\label{eq:recall}
    \mathrm{recall}(\hat{\mathcal{G}}_h; \mathcal{G}_h) = \frac{|\hat{\mathcal{G}}_h \cap \mathcal{G}_h|}{|\mathcal{G}_h|},
\end{equation}
meaning that our method is nearly exact, and high sparsity, 
\begin{equation}\label{eq:sparsity}
    \mathrm{sparsity}(\hat{\mathcal{G}}_h) = 1 - \frac{|\hat{\mathcal{G}}_h|}{nm},
\end{equation}
which indicates that computation can be made efficient.\footnote{For the decoder self-attention the denominator in Eq.~\ref{eq:sparsity} becomes $n(n+1)/2$ due to ``causal'' masking.} %
Although a high sparsity may indicate that many computations can be ignored, converting this theoretical result into efficient computation is not trivial and potentially hardware-dependent. In this paper, rather than proposing a practical computational efficient method, we focus on showing that such methods do exist and that they can be designed to outperform fixed and learned pattern methods  while retaining a high amount of sparsity when compared to the ground-truth graph.  %

\paragraph{Our strategies.}

We teach the student model to predict $\hat{\mathcal{G}}_h \approx \mathcal{G}_h$ by taking inspiration from the Reformer model \citep{Kitaev2020Reformer} and the Routing Transformer \citep{roy-etal-2021-efficient}. 
Formally, we define a set of $B$ buckets, $\mathcal{B} = \{1, \ldots, B\}$, and learn functions $f_q, f_k: \mathbb{R}^d \rightarrow 2^{\mathcal{B}} \setminus \{\varnothing\}$, which assign a query or a key to one or more buckets. 
We will discuss in the sequel different design strategies for the functions $f_q, f_k$. 
Given these functions, the predicted graph is:
\begin{equation}
    \hat{\mathcal{G}}_h = \{(\mathbf{q}_i, \mathbf{k}_j) \mid f_q(\mathbf{q}_i) \cap  f_k(\mathbf{k}_j) \neq \varnothing\},
\end{equation}
that is, an edge is predicted between $\mathbf{q}_i$ and $\mathbf{k}_j$ iff they are together in some bucket. 

We present three strategies, based on distance-based pairing (\S\ref{sec:distance-based}), quantization (\S\ref{sec:quantization}) and clustering (\S\ref{sec:clustering}). 
As a first step, all strategies require learning a metric that embeds the graph (projecting queries and keys) into a lower-dimensional space $\mathbb{R}^r$ with $r \ll d$, such that positive query-key pairs are close to each other, and negative pairs are far apart.

\subsection{Learning projections} \label{subsec:learning_projections}

According to the $\alpha$-entmax sparse-consistency property, in order to get a good approximation of $\mathcal{G}_h$, we would like that $f_q$ and $f_k$ produce a graph $\hat{\mathcal{G}}_h$ that maximizes recall, defined in Eq.~\ref{eq:recall}.  
However, maximizing recall in this setting is difficult since we do not have ground-truth bucket assignments. Instead, we recur to a contrastive learning approach by learning projections via negative sampling, which is simpler and more scalable than constrained clustering approaches \citep{wagstaff2001constrained,de2012constrained}.

For each head, we start by projecting the original query and key $\mathbf{q}, \mathbf{k} \in \mathbb{R}^d$ vectors into lower dimensional vectors $\mathbf{q}', \mathbf{k}' \in \mathbb{R}^r$ such that $r \ll d$. 
In practice, we use a simple head-wise linear projection for all queries and keys $g_\theta: \mathbb{R}^d \rightarrow \mathbb{R}^r$. 
To learn the parameters of the projection layer we minimize a hinge loss with margin $\omega$ for each head $h$:
\begin{equation}\label{eq:projection_loss}
    \mathcal{L}_\theta(\mathcal{G}_h) = \Big[ 
    \omega + 
    \|\mathbf{q}' - \mathbf{k}'_{\text{P}}\|^2_2 - 
    \|\mathbf{q}' - \mathbf{k}'_{\text{N}}\|^2_2 
    \Big]_{+},
\end{equation}
where $(\mathbf{q}', \mathbf{k}_{\text{P}}') \in \mathcal{G}_h$ is a positive pair and $(\mathbf{q}', \mathbf{k}_{\text{N}}') \notin \mathcal{G}_h$ is a negative pair sampled uniformly at random. %
In words, we want the distance between a query vector to negative pairs to be larger than the distance to positive pairs by a margin $\omega$. This approach can also be seen as a weakly-supervised learning problem, where the goal is to push dissimilar points away while keeping similar points close to each other \citep{xing2002distance,weinberger2009distance,bellet2015metric}.

\subsection{Distance-based pairing}\label{sec:distance-based} 

To take advantage of the proximity of data points on the embedded space, we first propose a simple method to connect query and key pairs whose Euclidean distance is less than a threshold $t$, i.e. $\hat{\mathcal{G}}_h = \{(\mathbf{q}_i, \mathbf{k}_j) \mid \|\mathbf{q}'_i - \mathbf{k}'_j\|_2 \leq t\}$. Although this method also requires $O(n^2)$ computations, it is more efficient than a vanilla transformer since it reduces computations by a factor of $d/r$ by using the learned projections. This method is also useful to probe the quality of the embedded space learned by the projections, since the recall of our other methods will be contingent on it.

\subsection{Buckets through quantization}\label{sec:quantization}

Our second strategy quantizes each dimension $1, \dots, r$ of the lower-dimensional space into $\beta$ bins, placing the queries and keys into the corresponding buckets ($B = r\beta$ buckets in total). This way, each $\mathbf{q}_i$ and $\mathbf{k}_j$ will be placed in exactly $r$ buckets (one per dimension). If $\mathbf{q}_i$ and $\mathbf{k}_j$ are together in some bucket, Sparsefinder predicts that $(\mathbf{q}_i, \mathbf{k}_j) \in \hat{\mathcal{G}}_h$. Note that for this quantization strategy no learning is needed, only the hyperparameter $\beta$ and the binning strategy need to be chosen. We propose a fixed-size binning strategy: divide each dimension into $\beta$ bins such that all bins have exactly $\lceil n / \beta \rceil$ elements. %
In practice, we append padding symbols to the input to ensure that bins are balanced.

\subsection{Buckets through clustering}\label{sec:clustering}

The clustering strategy uses the low-dimensional projections and runs a clustering algorithm to assign $\mathbf{q}_i$ and $\mathbf{k}_j$ to one or more clusters. In this case, each cluster corresponds to a bucket. In our paper, we employed $k$-means to learn $B$ centroids $\{\mathbf{c}_1, \ldots, \mathbf{c}_B\}$, where each $c_b \in \mathbb{R}^{r}$, over a small portion of the training set. This strategy is similar to the Routing Transformer's online $k$-means \citep{roy-etal-2021-efficient}, but with two key differences: (a) our clustering step is applied offline; (b) we assign points to the top-$k$ closest centroids rather than assigning the closest top-$k$ closest points to each centroid, ensuring that all queries are assigned to a cluster.\footnote{
The difference relies on the dimension on which the top-$k$ operation is applied. 
Routing Transformer applies top-$k$ to the input dimension, possibly leaving some queries unattended, whereas Sparsefinder applies to the centroids dimension, avoiding this problem.} 
At test time, we use the learned centroids to group queries and keys into $k$ clusters each:
\begin{align}
    f_q(\mathbf{q}_i) &= \argtopk_{1 \leq b \leq B} - \|\mathbf{q}_i - \mathbf{c}_b\|^2_2, \\
    f_k(\mathbf{k}_j)  &= \argtopk_{1 \leq b \leq B} - \|\mathbf{k}_j - \mathbf{c}_b\|^2_2,
\end{align}
where the $\argtopk$ operator returns the indices of the $k\textsuperscript{th}$ largest elements.
As in the quantization-based approach, queries and keys will attend to each other, i.e., Sparsefinder predicts  $(\mathbf{q}_i, \mathbf{k}_j) \in \hat{\mathcal{G}}_h$ if they share at least one cluster among the $k$ closest ones. Smaller values of $k$ will induce high sparsity graphs, whereas a larger $k$ is likely to produce a denser graph but with a higher recall.

\subsection{Computational cost}

Let $L$ be the maximum number of elements in a bucket. The time and memory cost of bucketed attention computed through quantization or clustering is $\mathcal{O}(B L^2)$. %
With balanced buckets, we 
get a complexity of $\mathcal{O}(n^{1.5})$ by setting $B = \sqrt{n}$. Although this cost is sub-quadratic, leveraging the sparse structure of $\hat{\mathcal{G}}_h$ in practice is challenging, since it might require specialized hardware or kernels. 
In general, we have $|\hat{\mathcal{G}}_h| = \sum_{b=1}^B n_b m_b \ll nm$, where $n_b$ and $m_b$ are the number of queries and keys in each bucket, since we have small complete bipartite graphs on each bucket. 
Instead of viewing quadratic methods only in light of their performance, we adopt an alternative view of assessing the tradeoff of these methods in terms of sparsity and recall of their approximation $\hat{\mathcal{G}}_h$. This offers a theoretical perspective to the potential performance of each approximation on downstream tasks, helping to find the best approximations for a desired level of sparsity.

\begin{figure*}[t]
    \centering
    \includegraphics[width=1\textwidth]{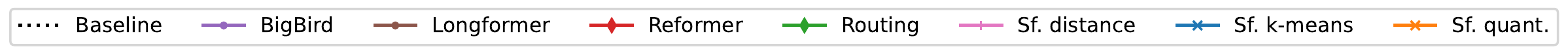}
    \\
    \includegraphics[width=1\textwidth]{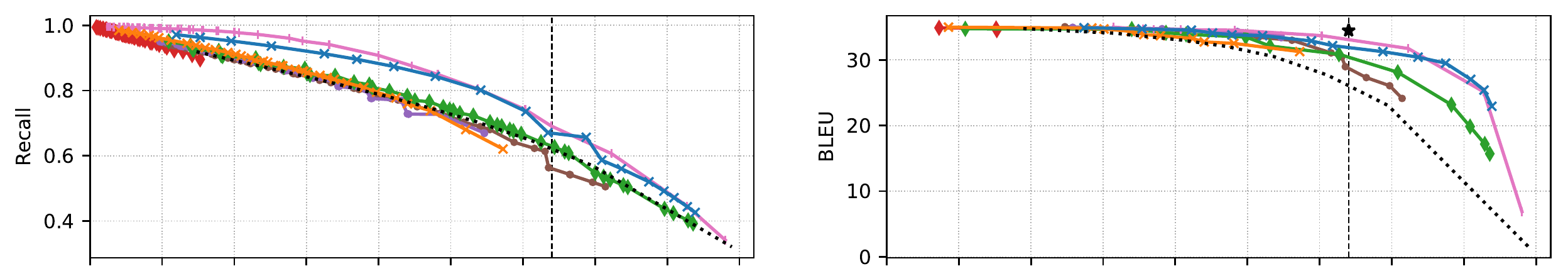} \\
    \includegraphics[width=1\textwidth]{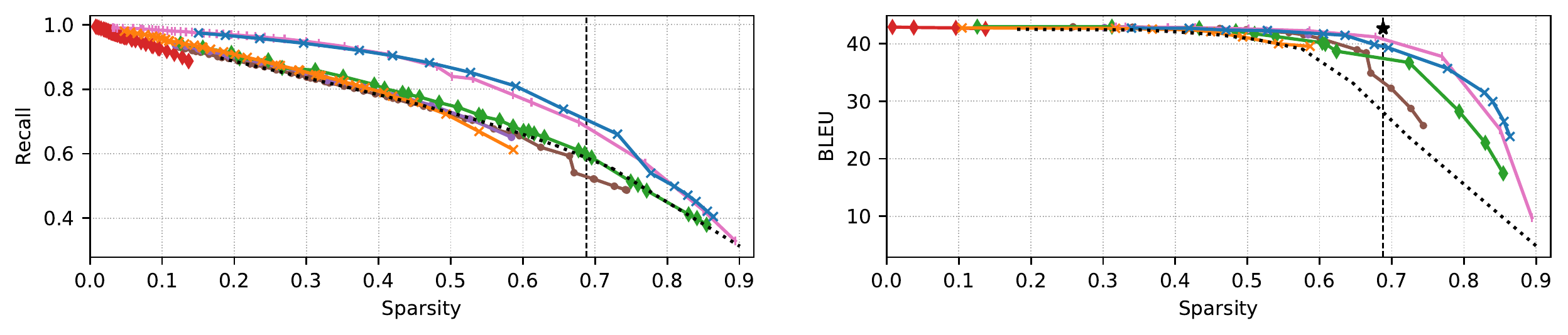} 
    \caption{Sparsity-recall (left) and sparsity-BLEU (right) tradeoff averaged across all layers and heads on IWSLT \textsc{en$\to$de} (top) and \textsc{en$\to$fr} (bottom). The vertical dashed line represents the gold sparsity obtained by the original $\alpha$-entmax transformer (which requires quadratic computation), and the starred marks depict its BLEU score: $34.47$ on \textsc{en$\to$de} and $42.65$ on \textsc{en$\to$fr}.}
    \label{fig:tradeoffs_mt}
\end{figure*} 

\subsection{Combining learned and fixed patterns}

As pointed out in prior work \citep{voita-etal-2019-analyzing}, several attention heads rely strongly in local patterns or prefer to attend to a particular position, more promimently in initial layers.  Therefore, we take inspiration from the Longformer \cite{Beltagy2020Longformer} and BigBird \cite{zaheer2020bigbird} and combine learned sparse patterns with window and global patterns by adding connections in the predicted graph $\hat{\mathcal{G}}_h$ to improve the recall of all methods. Figure~\ref{fig:sparsefinder_overview} illustrates how these patterns are combined in the last step.

\section{Experiments: Machine Translation} \label{sec:experiments_mt}

\paragraph{Setup.} 
We pretrain a \textit{transformer-large} model (6 layers, 16 heads) on the Paracrawl dataset~\citep{espla-etal-2019-paracrawl}. Next, we finetune it with $\alpha$-entmax, fixing $\alpha=1.5$ for all heads, on \textsc{en$\to$de} and \textsc{en$\to$fr} language pairs from IWSLT17 \citep{cettolo2017overview}.
We use the 2011-2014 sets as validation data and the 2015 set as test data. We encode each word using byte pair encoding (BPE, \citealt{sennrich-etal-2016-neural}) with a joint segmentation of 32k merges. 
As \citet{vaswani2017attention}, we finetune our models using the Adam optimizer with an inverse square root learning rate scheduler, with an initial value of $5\times10^{-4}$ and a linear warm-up in the first $4000$ steps.
We evaluate translation quality with sacreBLEU \citep{post-2018-call}. 
Training details, hyperparameters, and data statistics are described in \S\ref{sec:mt_setup}.

\paragraph{Learning projections.}
To learn projections for queries and keys (\S\ref{subsec:learning_projections}),  %
we randomly selected 10K long instances ($n>20$ tokens) from the training set and extracted the $\alpha\text{-entmax}$ attention graphs $\mathcal{G}_h$ from the decoder self-attention for each head. 
This led to an average of 8M and 9M positive pairs ($\mathbf{q}_i, \mathbf{k}_j$) per layer for \textsc{en$\to$de} and \textsc{en$\to$fr}, respectively.
In practice, due to the small number of parameters for each head (only 4,160), a single epoch with Adam was sufficient to optimize the loss in Eq.~\ref{eq:projection_loss}. 
The hyperparameters and the training details for learning projections can be found in \S\ref{sec:mt_setup}.

\paragraph{Pareto-curves.} Using the learned projections, we investigate the recall and the accuracy of all Sparsefinder variants by comparing them with Longformer, BigBird, Reformer, and Routing Transformer. To get a fair comparison, we analyze each method for different levels of sparsity by varying the following hyperparameters:
\begin{itemize}
    \item \textbf{Distance-based methods}: the threshold $t$ within $\{0.5, 1.0, 1.5, 2.0, 2.5, 3.0, 3.5, 4.0, 4.5, 5.0\}$.

    \item \textbf{Bucketing-based methods}: the number of buckets $B$ within $\{2,4,6,8,10,12,16,20\}$.
    
    \item \textbf{Fixed-pattern methods}: the number of random blocks of size 1 within $\{2,4,6,8,10,12,16,20\}$ for BigBird; and the number of random global tokens within $\{2,4,6,8,10,12,16,20\}$ for Longformer.
\end{itemize}
We also add global and local patterns to all methods, varying the window size within $\{0, 1, 3, 5, 7, 9, 11, 15, 19, 23, 27\}$ to get different levels of locality. We further compare all methods with a simple window baseline that only induces the window and global patterns. 
Since all methods exhibit a tradeoff between sparsity and recall/accuracy, we plot the scores obtained by varying the hyperparameters and draw their respective \textbf{Pareto frontier} to see the optimal Pareto-curve. Methods whose points lie below this frontier are said to be \textbf{Pareto-dominated}, meaning that their recall/accuracy cannot be increased without sacrificing sparsity, or vice-versa. Concretely, each point on the curve is measured as a function of the approximation to the ground-truth $\alpha$-entmax attention graph $\mathcal{G}_h$ by replacing it by $\hat{\mathcal{G}}_h$ at test time.

\paragraph{Sparsity-recall tradeoff.}

Pareto-curves for the sparsity-recall tradeoff are shown on the left of Figure~\ref{fig:tradeoffs_mt} for both language pairs. 
Overall, both language pairs have similar trends for all methods. 
Sparsefinder's distance-based and clustering approaches Pareto-dominates the other methods, followed by Routing Transformer. 
Interestingly, Longformer, BigBird, Routing Transformer, and Sparsefinder's bucketing approach perform on par with the  baseline, indicating that a simple local window is a hard baseline to beat.  
Since the LSH attention in Reformer shares queries and keys before hashing, the resultant buckets are also shared for queries and keys, explaining the high recall and the low sparsity of Reformer.  %

\paragraph{Sparsity-accuracy tradeoff.} 
We show the tradeoff between sparsity and BLEU on the right of Figure~\ref{fig:tradeoffs_mt}. For lower levels of sparsity, all methods perform well, close to the full entmax transformer. But as sparsity increases, indicating that only a few computations are necessary, we see that the distance-based and $k$-means variants of Sparsefinder Pareto-dominate other methods, keeping a very high BLEU without abdicating sparsity. 
In particular, Sparsefinder's distance and clustering approaches perform on par with the full entmax transformer when the amount of sparsity is close to the original entmax transformer (around the vertical dashed line). 
Overall, these plots show that methods with a high recall for higher levels of sparsity also tend to have a higher BLEU score.

\paragraph{Learned patterns.} We select some heads and show in Figure~\ref{fig:examples_attention_mt} examples of the pattern learned by our $k$-means variant on \textsc{en$\to$fr}. More examples can be found in \S\ref{sec:attention_plots_supp}. We note that the window pattern is useful to recover local connections. We can see that the $k$-means variant groups more query and key pairs than the actual number of ground-truth edges (left plots). However, due to the sparse-consistency property (right plots), most of these predictions receive zero probability by $\alpha$-entmax, resulting in a very accurate approximation. 

\begin{figure}[t]
    \centering
    \includegraphics[width=0.49\columnwidth]{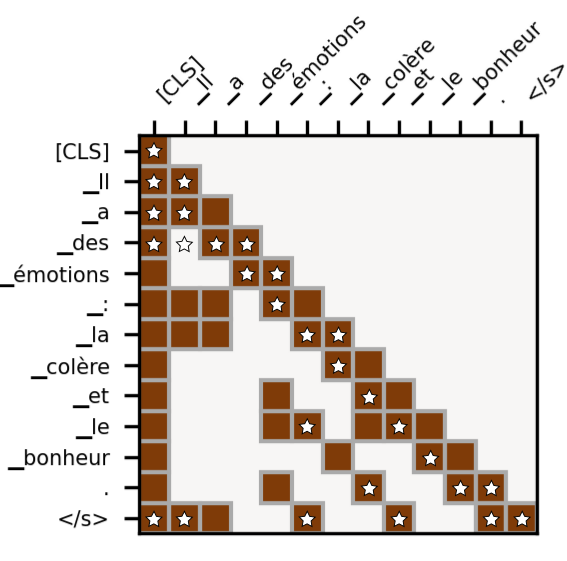}
    \includegraphics[width=0.49\columnwidth]{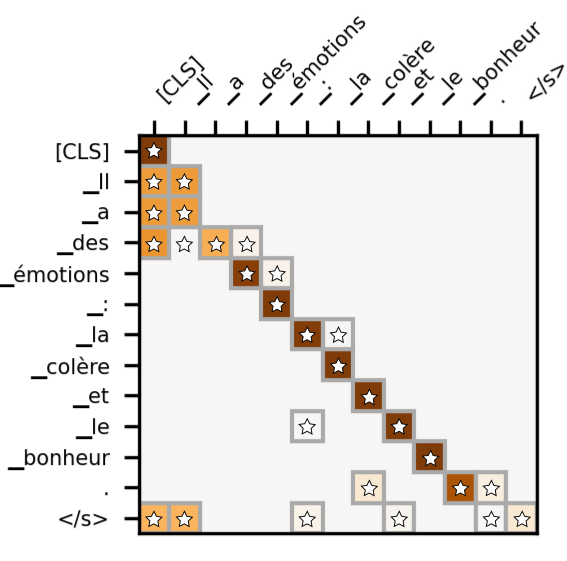}
    \caption{Learned patterns by Sparsefinder $k$-means (left) and the subsequent attention weights (right). Starred blocks represent ground-truth edges.}
    \label{fig:examples_attention_mt}
\end{figure}

\begin{figure*}[t]
    \centering
    \includegraphics[width=1\textwidth]{figs/legend_pareto_accessible.pdf}
    \\
    \includegraphics[width=1\textwidth]{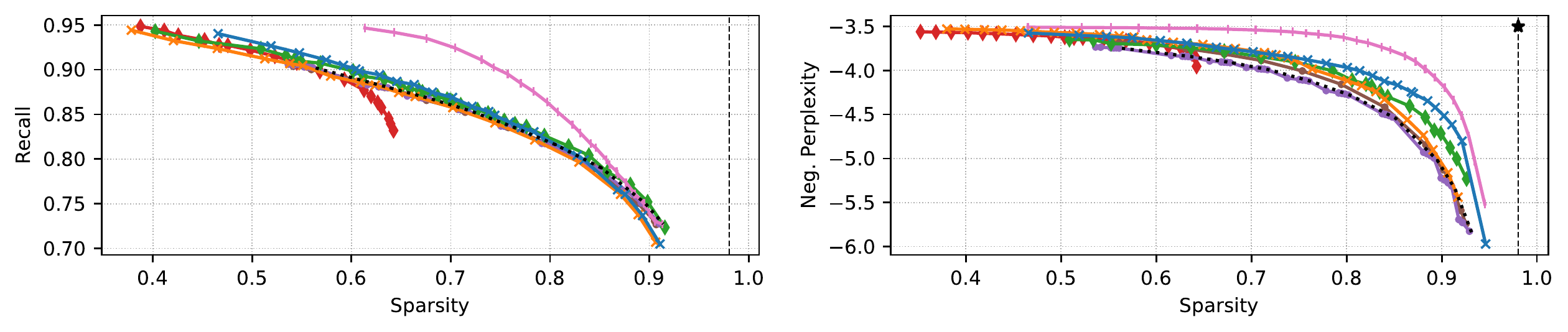}
    \caption{Sparsity-recall and sparsity-(neg-)perplexity tradeoff averaged across all layers and heads on WikiText-103. The vertical dashed line represents the gold sparsity obtained by the full $\alpha$-entmax transformer.}
    \label{fig:tradeoffs_mlm}
\end{figure*}

\section{Experiments: Masked LM} \label{sec:experiments_qa}

\paragraph{Setup.} Following \citet{Beltagy2020Longformer}, we initialize our model from a pretrained RoBERTa checkpoint. We use the {\tt roberta-base} model from Huggingface's transformers library, with 12 layers and 12 heads.\footnote{\url{https://huggingface.co/roberta-base}} We finetune on WikiText-103 \citep{merity2016pointer}, replacing softmax by $\alpha\text{-entmax}$ with $\alpha=1.5$ for all heads. Training details, model hyperparameters, and data statistics can be found in~\S\ref{sec:mlm_setup}.

\paragraph{Learning projections.} 
As done for MT experiments, we learn to project keys and queries from the original 64 dimensions into $r=4$ dimensions. 
To this end, we use 1K random samples from the training set, each with length of 512, keeping half for validation. We extract the $\alpha\text{-entmax}$ attention graphs $\mathcal{G}_h$ but from the encoder self-attention of each head, leading to an average of 3M positive pairs per layer. Due to the small number of learnable parameters for each head ($256$), training was done with Adam for one epoch.

\paragraph{Results.}
Our full transformer trained with $\alpha$-entmax achieved a perplexity score of $3.5004$ with an overall sparsity of $0.9804$ on WikiText-103. As in sentence-level MT experiments, we measure the sparsity-recall and the sparsity-perplexity tradeoff via the change of $\mathcal{G}_h$ with $\hat{\mathcal{G}}_h$ at test time. Moreover, since MLM has longer inputs, we increased the range of the window pattern to $\{31, 41, 51, 75, 101, 125, 151, 175, 201, 251\}$. 

We show in Figure~\ref{fig:tradeoffs_mlm} the Pareto curves for the tradeoff between sparsity and recall (left), and the tradeoff between sparsity and perplexity (right). The curves for the sparsity-recall tradeoff are similar to the ones found in MT experiments, with the distance-based method outperforming all methods, followed by the $k$-means variant of Sparsefinder and Routing Transformer. 
In terms of perplexity, our distance-based approach also Pareto-dominates other methods, followed by our clustering variant and Routing Transformer.
As in the MT experiments, the window baseline yields a similar sparsity-recall curve to other approaches, reinforcing the importance of local patterns.
Although the distance-based method requires a quadratic number of computations, it reduces them by a factor of $d/r=64/4=16$, as described in \S\ref{sec:distance-based}, and achieves better recall and perplexity than any other tested method. This finding indicates clear room for improvement in designing efficient attention methods that have a better tradeoff between efficiency and accuracy than existing approaches. 

\paragraph{Learned patterns.} In Figure~\ref{fig:examples_attention_graph_mlm} we show  Sparsefinder $k$-means' predicted attention graphs for a specific attention head that originally learned to focus on coreference tokens. We can see that the pattern induced by Sparsefinder keeps the behavior of attending to coreferences. Concretely, our method achieves a high recall score ($\sim80\%$) with a high sparsity rate ($\sim75\%$) on this attention head. %

\begin{figure}[!htb]
    \centering
    \includegraphics[width=0.49\columnwidth]{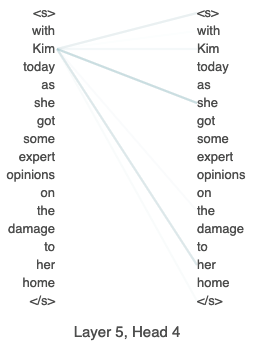}
    \includegraphics[width=0.49\columnwidth]{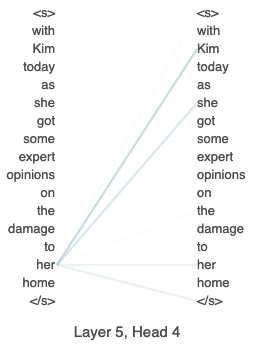}
    \caption{Attention pattern learned by Sparsefinder $k$-means that focus on coreference tokens.}
    \label{fig:examples_attention_graph_mlm}
\end{figure}

\paragraph{Cluster analysis.} To understand what is represented in each cluster learned by Sparsefinder $k$-means, we run the following experiment: we obtain POS tags using spaCy,\footnote{\url{https://spacy.io/}} and calculate the distribution of each tag over clusters for all heads. We show an example in Figure~\ref{fig:cluster_pos}, where Sparsefinder learned a cluster that makes verbs and nouns attend to themselves, and additionally to most auxiliary verbs.

\begin{figure}[!htb]
    \centering
    \includegraphics[width=\columnwidth]{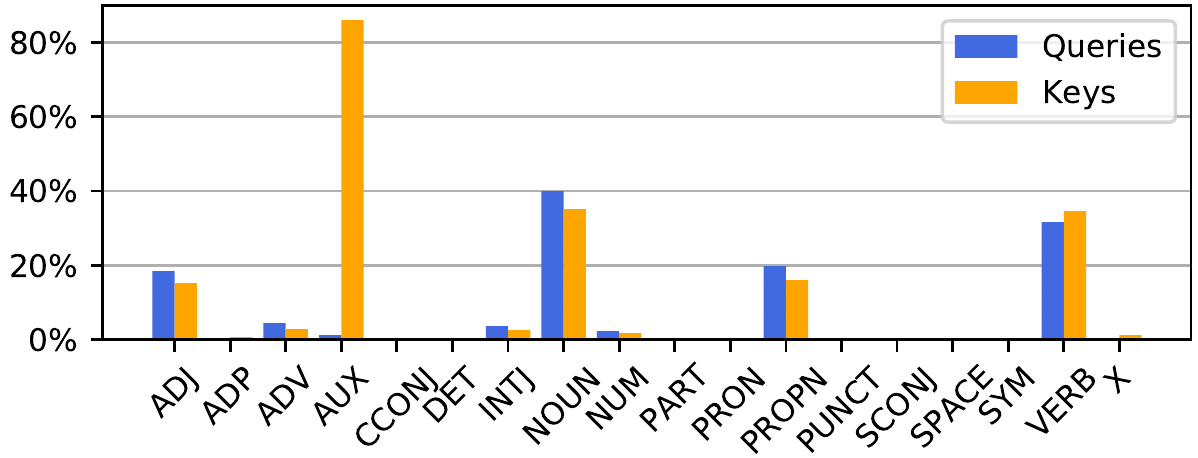}
    \caption{
    Percentage of POS tags assigned to a given cluster on the entire Wikitext 103 validation set.
    }
    \label{fig:cluster_pos}
\end{figure}

\subsection{Efficient Sparsefinder}

\begin{figure*}[t]
    \centering
    \includegraphics[width=0.94\textwidth]{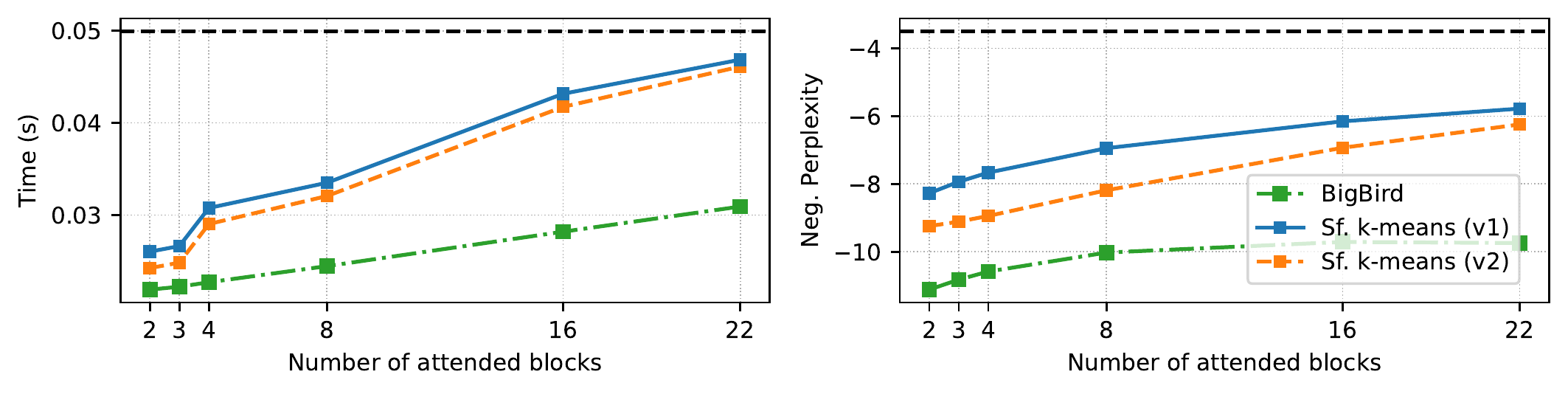}
    \caption{Comparison of Sparsefinder and BigBird in terms of running time and (negative) perplexity by varying the number of attended blocks. The black dashed line represents the results obtained by the full $\alpha$-entmax transformer.}
    \label{fig:sparsefinder_vs_bigbird_time}
\end{figure*}

We now turn to the question of making Sparsefinder efficient in practice. Before we proceed, we note that comparison between methods usually depends on the specific implementation used, which influences the measurements and can also require specialized hardware. 
This leaves BigBird and Routing Transformer as the only models we can compare with in practice: Reformer includes other optimizations that are not part of the attention mechanism, and Longformer is based on CUDA kernels, specialized for fast computation. Lastly, the strategy used in Routing Transformer is incorporated in Sparsefinder (v2), where we use Sparsefinder's centroids with Routing Transformer top-$k$ strategy.
In order to make Sparsefinder more efficient, we adopt the key strategy of BigBird: work with contiguous chunks rather than single tokens, creating blocks in the attention matrix. More precisely, we learn projections over chunked tokens following Equation~\ref{eq:projection_loss}, where $(\mathbf{q}', \mathbf{k}'_P)$ is a positive pair if any token inside the chunk is part of a positive pair of the original $\alpha$-entmax graph, and similarly, a pair $(\mathbf{q}', \mathbf{k}'_N)$ is negative if all tokens inside the chunk are negative. Thus, given a block/chunk size $z$, the size of the dense attention graph reduces from $|\mathcal{G}_h| = nm$ to $|\mathcal{G}_h| = \lceil nm / z^2 \rceil$ (with zero-padding).

\paragraph{Implementation.} In order to be comparable to BigBird, we implement a routine that caps the maximum number of attended blocks in Sparsefinder, analogous to the number of random blocks used in BigBird. We propose two variants: \textbf{(v1)} computes dot-products between all chunked vector projections and then returns the top-$k$ blocks, and \textbf{(v2)} selects the top-$k$ blocks closest to the learned centroids and computes dot-products for these blocks. The first variant is more costly, yet it may lead to a more robust selection, whereas the second variant resembles Routing Transformer's top-$k$ strategy.

\paragraph{Results.} We measure the clock-time of the MLM model evaluated on 500 examples with a batch size of 8. We vary the number of attended blocks within \{2, 3, 4, 8, 16, 22 $\approx \sqrt{n}$\}, the block size in \{2, 4, 8, 16\}, and compute perplexity for values of $B$ (number of clusters) within \{2, 4, 8, 12, 16, 20\}. We use a window size of $3$ in all experiments to capture the controlled hyperparameters' impact better. Figure~\ref{fig:sparsefinder_vs_bigbird_time} shows plots by averaging runs with different block sizes and number of clusters. 
As expected, using a lower number of attended blocks leads to improvements in terms of running time, yet all models perform poorly on the MLM task. As we increase the number of blocks, we can see both a boost in terms of MLM performance and an increased running time. By comparing Sparsefinder and BigBird, we notice that BigBird is faster than Sparsefinder, but increasing the number of attended (random) blocks in BigBird does not lead to significant improvements on the real task. In contrast, both versions of Sparsefinder can improve the MLM performance while still being faster than a regular $\alpha$-entmax transformer. In particular, by attending to only 2 blocks, Sparsefinder is able to achieve a better MLM score than BigBird with 22 random blocks while still being faster than it. Plots for each block size can be found in \S\ref{sec:efficient_sparsefinder_supp}.

\section{Conclusions}

We proposed Sparsefinder, a  method to identify the sparsity pattern of entmax-based transformers while avoiding full computation of the score matrix.  
Our method learns a low-dimensional projection of queries and keys with a contrastive objective, %
and comes with three variants: distance, quantization, and clustering-based. We compared these variants against competing approaches on two tasks: machine translation and masked language modeling. We obtained favorable sparsity-recall and sparsity-accuracy tradeoff curves.  Our theoretical sparsity provides a lower bound for how much computational sparsity can be achieved, and may guide future research on efficient transformers. %
Finally, we proposed a simple extension of Sparsefinder that resembles the block-based attention of BigBird by learning projections of chunked tokens, which exhibits a promising direction in terms of the trade-off of learnable sparsity with computation time and accuracy.

\section*{Acknowledgments}
This work was supported by the European Research Council (ERC StG DeepSPIN 758969), P2020 project MAIA (LISBOA-01-0247-FEDER045909), and Fundação para a Ciência e Tecnologia through project PTDC/CCI-INF/4703/2021 (PRELUNA) and contract UIDB/50008/2020.

\bibliography{anthology,acl}
\bibliographystyle{acl_natbib}

\appendix

\clearpage
\onecolumn
\appendix

\section{Sparse Attention}\label{sec:sparse_attention_supp}

A natural way to get a sparse attention distribution is by using the {\bf sparsemax transformation} \citep{martins2016softmax},
which computes an Euclidean projection of the score vector onto the probability simplex $\triangle^n := \{\mathbf{p} \in \mathbb{R}^n \mid \mathbf{p}\ge \mathbf{0}, \,\, \mathbf{1}^\top\mathbf{p} = 1\}$, or, more generally, the {\bf $\alpha$-entmax transformation} \citep{peters-etal-2019-sparse}: 
    \begin{equation}\label{eq:entmax}
        \alpha\text{-entmax}(\mathbf{z}) := \argmax_{\mathbf{p} \in \triangle^{n}} \mathbf{p}^\top \mathbf{z} + H_\alpha(\mathbf{p}),
    \end{equation}
where $H_\alpha$ is a generalization of the Shannon and Gini entropies proposed by \citet{Tsallis1988},  parametrized by a scalar $\alpha\ge 1$: 
\begin{equation}\label{eq:tsallis}
    H_\alpha (\mathbf{p}) := \begin{cases}
                                                \frac{1}{\alpha(\alpha-1)}\sum_j(p_j-p_j^\alpha), &  \alpha \neq 1\\
                                                -\sum_j p_j \log p_j, & \alpha=1.
                                            \end{cases}
\end{equation}
Setting $\alpha=1$ recovers the softmax function, while for any value of $\alpha>1$ this transformation can return a sparse probability vector. %
Letting $\alpha=2$, we recover sparsemax. A popular choice is $\alpha=1.5$, which has been successfully used  in machine translation and morphological inflection applications \citep{peters-etal-2019-sparse,correia-etal-2019-adaptively}.

\paragraph{Proof to Proposition~\ref{prop:sparse_consistency_property}.}
\begin{proof}

From the definition of $\mathbf{z}|_\mathbf{m}$ and from Eq.~\ref{eq:solution_entmax}, we have that
    \begin{equation}\label{eq:proof_ineq}
        \begin{cases} 
            z_j|_\mathbf{m} = z_j >  \frac{\tau(\mathbf{z})}{\alpha-1}  & \text{if } p_j^* > 0 \\
            z_j|_\mathbf{m} \le z_j \le \frac{\tau(\mathbf{z})}{\alpha-1} & \text{if } p_j^* = 0.
        \end{cases}
    \end{equation}
We first prove that $\tau(\mathbf{z}|_\mathbf{m}) = \tau(\mathbf{z})$. 
From the definition of $\tau(\mathbf{z})$ we have that $\sum_j [(\alpha-1)z_j - \tau(\mathbf{z})]_{+}^{\nicefrac{1}{\alpha-1}} = 1$. Plugging the (in)equalities from Eq.~\ref{eq:proof_ineq}, we thus have
\begin{align}\label{eq:proof_tau}
    1 &= \sum_j [(\alpha-1)z_j - \tau(\mathbf{z})]_{+}^{\nicefrac{1}{\alpha-1}} = \sum_j [(\alpha-1)z_j|_\mathbf{m} - \tau(\mathbf{z})]_{+}^{\nicefrac{1}{\alpha-1}}.
\end{align}
Since $\tau(\mathbf{z})$ satisfies the second equation -- which is the condition that defines  $\tau(\mathbf{z}|_\mathbf{m})$ -- we thus conclude that $\tau(\mathbf{z}|_\mathbf{m}) = \tau(\mathbf{z})$. 
Combining the results in Eqs.~\ref{eq:proof_ineq}--\ref{eq:proof_tau}, we see that the supports of $\alpha\text{-entmax}(\mathbf{z})$ and $\alpha\text{-entmax}(\mathbf{z}|_\mathbf{m})$ are the same and so are the thresholds $\tau$, and therefore from Eq.~\ref{eq:solution_entmax} we conclude that $\alpha\text{-entmax}(\mathbf{z}|_\mathbf{m}) = \alpha\text{-entmax}(\mathbf{z})$.
\end{proof}

\section{Computing infrastructure}

Our infrastructure consists of 4 machines with the specifications shown in Table~\ref{table:computing_infrastructure}. The machines were used interchangeably, and all experiments were executed in a single GPU. Despite having machines with different specifications, we did not observe large differences in the execution time of our models across different machines. 

\begin{table}[!htb]
    \small
    \begin{center}
    \begin{tabular}{l ll}
        \toprule
        \sc \# & \sc GPU & \sc CPU  \\
        \midrule
        1.   & 4 $\times$ Titan Xp - 12GB           & 16 $\times$ AMD Ryzen 1950X @ 3.40GHz - 128GB \\
        2.   & 4 $\times$ GTX 1080 Ti - 12GB        & 8 $\times$ Intel i7-9800X @ 3.80GHz - 128GB \\
        3.   & 3 $\times$ RTX 2080 Ti - 12GB        & 12 $\times$ AMD Ryzen 2920X @ 3.50GHz - 128GB \\
        4.   & 3 $\times$ RTX 2080 Ti - 12GB        & 12 $\times$ AMD Ryzen 2920X @ 3.50GHz - 128GB \\
        \bottomrule
    \end{tabular}
    \end{center}
    \caption{Computing infrastructure.} 
    \label{table:computing_infrastructure}
\end{table}

\section{Machine Translation}\label{sec:mt_setup}

\subsection{Setup}

\paragraph{Data.}
Statistics for all datasets used in MT experiments can be found below in Table~\ref{table:datasets_mt}.
\begin{table}[!htb]
    \small
    \begin{center}
    \begin{tabular}{lccc}
        \toprule
        \sc Dataset & \sc \# train & \sc \# test & \sc Avg. sentence length \\
        \midrule
        IWSLT17 (\textsc{en$\to$de})       & 206K          & 1080          & 20 \textcolor{gray}{±14} \,\,\,/\,\,\, 19 \textcolor{gray}{±13} \\
        IWSLT17 (\textsc{en$\to$fr})       & 233K          & 1210          & 20 \textcolor{gray}{±14} \,\,\,/\,\,\, 21 \textcolor{gray}{±15} \\
        \bottomrule
    \end{tabular}
    \end{center}
    \caption{Statistics for MT datasets. 
    }
    \label{table:datasets_mt}
\end{table}

\paragraph{Training and Model.}
We replicated the sentence-level model of \citet{fernandes21acl} 
with the exception that we used $\alpha$-entmax with $\alpha=1.5$ instead of softmax in all attention heads and layers. Table~\ref{tab:hyperparams_nmt} shows some architecture (transformer large) and training hyperparameters used for MT experiments.  We refer to the original work of \citet{fernandes21acl} for more training details.

\begin{table}[!htb]
    \centering
    \small
    \begin{tabular}{ll}
        \toprule
        \sc Hyperparam. & \sc Value  \\
        \midrule
        Hidden size        & 1024     \\
        Feedforward size        & 4096     \\
        Number of layers        & 6     \\
        Number of heads        & 16     \\
        Attention mapping $\pi$     & $1.5$-entmax \\
        Optimizer                   & Adam  \\
        Number of epochs            & 20    \\
        Early stopping patience     & 10     \\
        Learning rate               & 0.0005 \\
        Scheduling                  & Inverse square root \\
        Linear warm-up steps        & 4000 \\
        Dropout                     & 0.3 \\
        CoWord dropout              & 0.1 \\
        
        Beam size                   & 5 \\
        \bottomrule
    \end{tabular}
    \caption{Hyperparmeters for neural machine translation models.}
    \label{tab:hyperparams_nmt}
\end{table}

\subsection{Projections setup}\label{sec:nmt_projections_setup}

\paragraph{Data.}
Statistics for the subsets of IWSLT used in the projection analysis can be found below in Table~\ref{table:datasets_proj_mt}.
\begin{table}[!htb]
    \small
    \begin{center}
    \begin{tabular}{l ccc c ccc}
        \toprule
        & \multicolumn{3}{c}{\sc Train} & & \multicolumn{3}{c}{\sc Validation} \\
        \cmidrule{2-4} \cmidrule{6-8}
        \sc Pair & \sc \# sent. & \sc \# pos. pairs & \sc Avg. sent. length & & \sc \# sent. & \sc \# pos. pairs & \sc Avg. sent. length \\
        \midrule
        \textsc{en$\to$de}       & 9K  & 8M \textcolor{gray}{±1M}  & 35 \textcolor{gray}{±16}  & & 1K  & 330K \textcolor{gray}{±56K}  & 36 \textcolor{gray}{±17} \\
        \textsc{en$\to$fr}       & 9K  & 9M \textcolor{gray}{±1M}  & 37 \textcolor{gray}{±17}  & & 1K  & 334K \textcolor{gray}{±58K}  & 37 \textcolor{gray}{±16} \\
        \bottomrule
    \end{tabular}
    \end{center}
    \caption{Statistics for subsets of IWSLT used for training and evaluating projections.
    }
    \label{table:datasets_proj_mt}
\end{table}

\paragraph{Training.} After extracting the $\alpha$-entmax graphs, we optimize the learnable parameters of Equation~\ref{eq:projection_loss} with Adam over a single epoch. Moreover, we used the $k$-means implementation from scikit-learn~\citep{scikit-learn} for our clustering-based approach. The hyperparameters used both for training the projections and for clustering with $k$-means are shown in Table~\ref{tab:hyperparams_projs_and_kmeans}.

\begin{table}[!htb]
    \centering
    \small
    \begin{tabular}{ll}
        \toprule
        \sc Hyperparam. & \sc Value  \\
        \midrule
        Projection dim. $r$         & 4     \\
        Loss margin $\omega$        & 1.0 \\
        
        Batch size                  & 16    \\
        Optimizer                   & Adam      \\
        Number of epochs            & 1     \\
        Learning rate               & 0.01     \\
        $\ell_2$ regularization     & 0     \\
        
        $k$-means init              & $k$-means++ \\
        $k$-means max num. inits    & 10 \\
        $k$-means max iters         & 300  \\
        \bottomrule
    \end{tabular}
    \caption{Hyperparmeters for MT projections.}
    \label{tab:hyperparams_projs_and_kmeans}
\end{table}

\paragraph{Projection analysis.}

We compare Sparsefinder, varying $B \in \{2, 4, 6, 8, 10, 12\}$ for bucket-based methods, and $t \in \{0.5, 1.0, 1.5, 2.0, 2.5\}$ for the distance-based variant, with the following methods:

\begin{itemize}
    \item \textbf{Window baseline:} connect all query and key pairs within a sliding window of size $w \in \{0, 1, 3, 5, 7, 9, 11, 15, 19, 23, 27\}$.

    \item \textbf{Learnable patterns:} Reformer by varying the number of buckets within $\{2, 4, 6, 8, 10, 12\}$; Routing transformer by varying the number of clusters within~$c \in \{2, 4, 6, 8, 10\}$ with top-$k$ set to $\lceil n / c \rceil$ (i.e. balanced clusters). 
    
    \item \textbf{Fixed patterns:} BigBird by varying the number of random blocks within $\{2, 4, 6, 8, 10\}$ with a block size of $1$; Longformer by varying the number of random global tokens within $\{4, 8, 12, 16, 20\}$. 
   
\end{itemize}

\paragraph{Sparsity-recall tradeoff per layer and head.} Plots are shown in Figures \ref{fig:sparsity_recall_tradeoff_mt_ende_projections_supp2} and \ref{fig:sparsity_recall_tradeoff_mt_enfr_projections_supp2} for \textsc{en$\to$de} and \textsc{en$\to$fr}, respectively.

\begin{figure}[!htb]
    \centering
    \includegraphics[width=\textwidth]{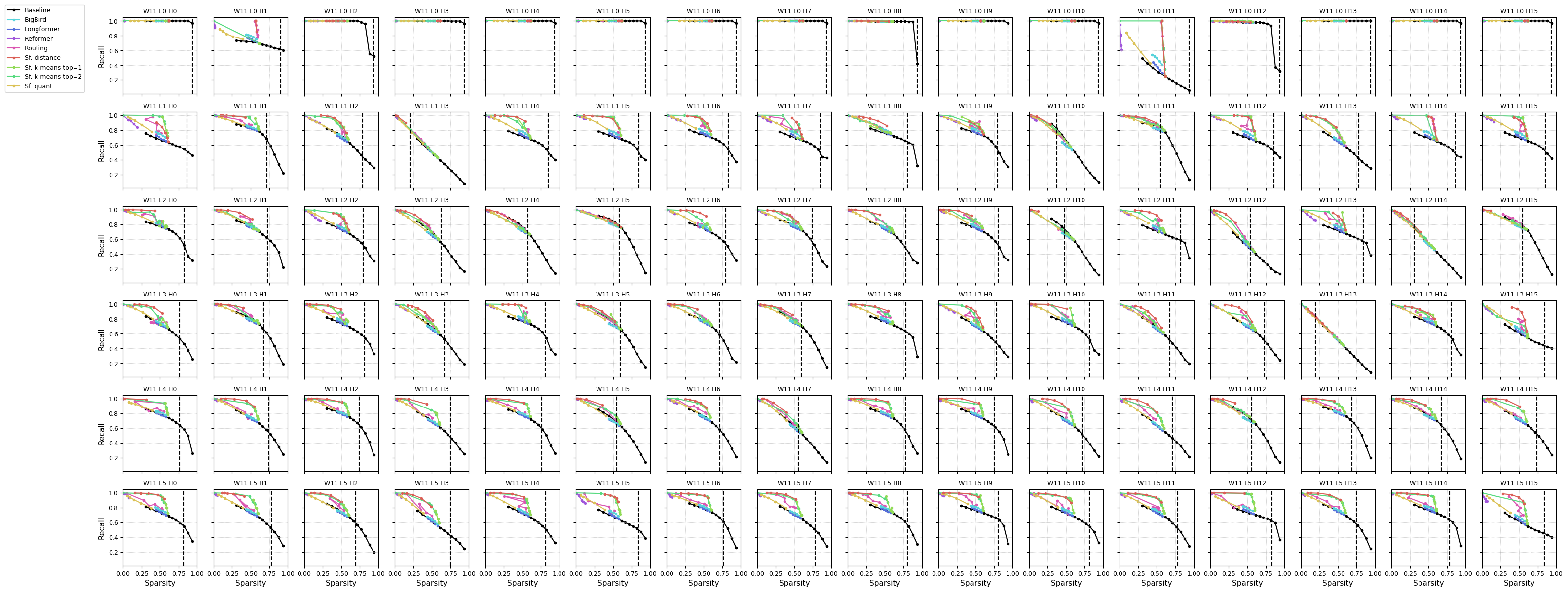}
    \caption{Sparsity-recall tradeoffs with a fixed window pattern of size 11 for \textsc{en$\to$de}.}
    \label{fig:sparsity_recall_tradeoff_mt_ende_projections_supp2}
\end{figure}

\begin{figure}[!htb]
    \centering
    \includegraphics[width=\textwidth]{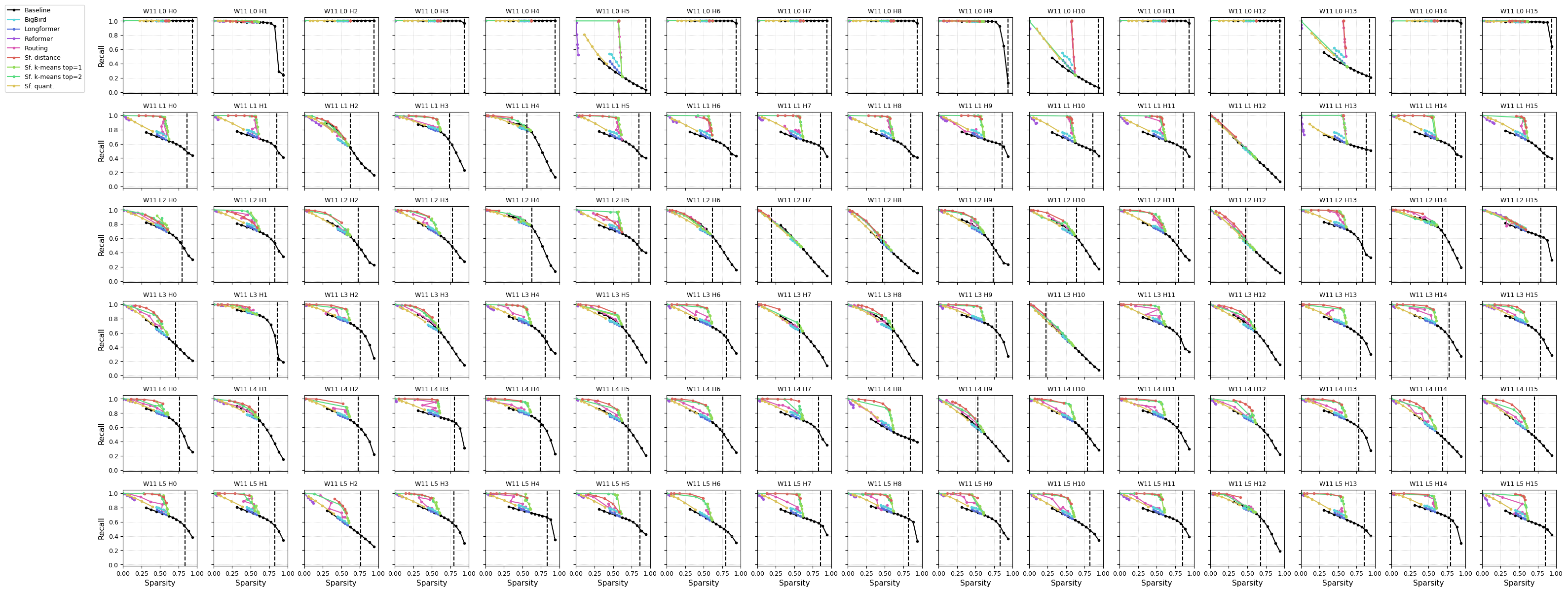}
    \caption{Sparsity-recall tradeoffs with a fixed window pattern of size 11 for \textsc{en$\to$fr}.}
    \label{fig:sparsity_recall_tradeoff_mt_enfr_projections_supp2}
\end{figure}

\section{Masked Language Modeling} \label{sec:mlm_setup}

\subsection{Setup}

\paragraph{Data and model.} In order to have a transformer model trained with $\alpha$-entmax, we finetuned RoBERTa-Base~\citep{liu2019roberta} on WikiText-103 \citep{merity2016pointer} over 3000 steps with Adam (learning rate of $3\times10^{-5}$).  To mimic the finetuning approach adopted by Longformer, we employed a batch size of 2 by accumulating gradients over 32 steps due to GPU memory constraints. 
Table~\ref{tab:hyperparams_mlm} shows some architecture (transformer large) and training hyperparameters used for MT experiments.  
We refer to the original work of \citet{liu2019roberta} for more architecture details.

\begin{table}[!htb]
    \centering
    \small
    \begin{tabular}{ll}
        \toprule
        \sc Hyperparam. & \sc Value  \\
        \midrule
        Hidden size             & 64     \\
        Feedforward size        & 3072     \\
        Max input length        & 514 \\
        Number of layers        & 12     \\
        Number of heads        & 12     \\
        Attention mapping $\pi$     & $1.5$-entmax \\
        Optimizer                   & Adam  \\
        Number of steps             & 3000    \\
        Learning rate               & 0.00003 \\
        \bottomrule
    \end{tabular}
    \caption{Hyperparmeters for masked language modeling models.}
    \label{tab:hyperparams_mlm}
\end{table}

\subsection{Projections setup} \label{sec:mlm_projections_setup}
 
\paragraph{Data and training.} The subset used for Masked LM projections experiments contains 500 instances for training and 500 instances for validation. Moreover, all instances have a sentence length of 512 tokens. We got 3M (±1M) positive pairs for training and 2.5M (±1M) for validation. The hyperparameters for Masked LM are the same as the ones used in the MT experiments, shown in Table~\ref{tab:hyperparams_projs_and_kmeans}.

\paragraph{Projection analysis.}

We perform the same analysis as in MT, but now we vary the window size of the baseline within \{0, 1, 3, 7, 11, 25, 31, 41, 51, 75, 101, 125, 151, 175, 201, 251, 301, 351, 401, 451, 501, 512\}. 

\paragraph{Sparsity-recall tradeoff per layer and head.} Plots are shown next in Figure~\ref{fig:sparsity_recall_tradeoff_mlm_projections_supp2}.

\begin{figure}[!htb]
    \centering
    \includegraphics[width=\textwidth]{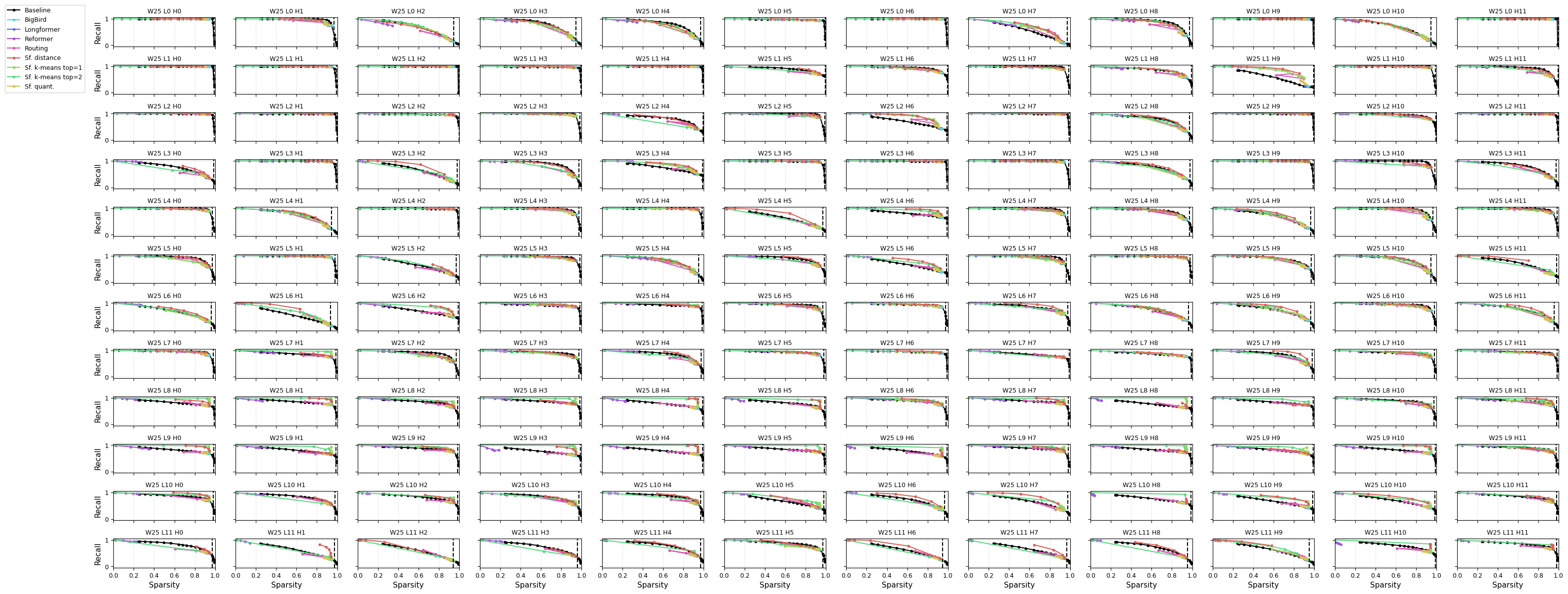}
    \caption{Sparsity-recall tradeoffs with a fixed window pattern of size 25 for MLM.}
    \label{fig:sparsity_recall_tradeoff_mlm_projections_supp2}
\end{figure}

\section{Attention plots} \label{sec:attention_plots_supp}

Examples of attention maps can be seen in Figure~\ref{fig:ex1_attention_map_supp} and \ref{fig:ex2_attention_map_supp}.

\begin{figure}[!htb]
    \centering
    \includegraphics[width=0.49\columnwidth]{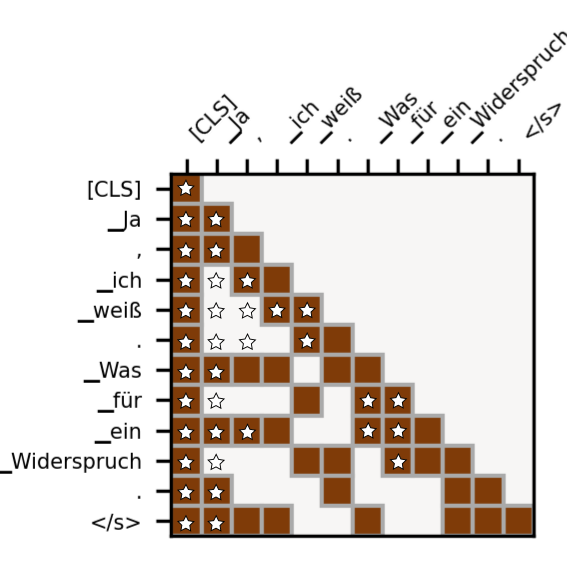}
    \includegraphics[width=0.49\columnwidth]{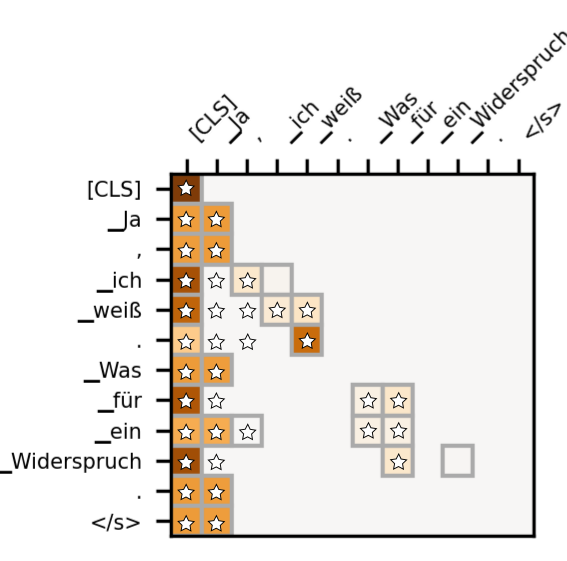}
    \caption{Learned patterns by Sparsefinder $k$-means (left) and the subsequent attention weights (right). Starred blocks represent ground-truth edges.} \label{fig:ex1_attention_map_supp}
\end{figure}

\begin{figure}[!htb]
    \centering
    \includegraphics[width=0.49\columnwidth]{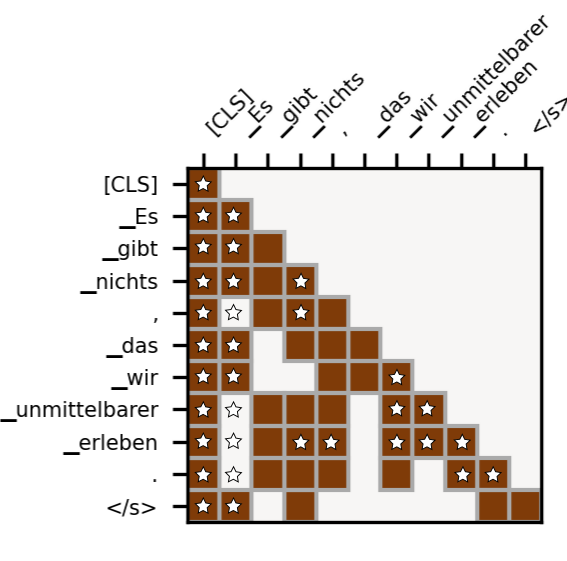}
    \includegraphics[width=0.49\columnwidth]{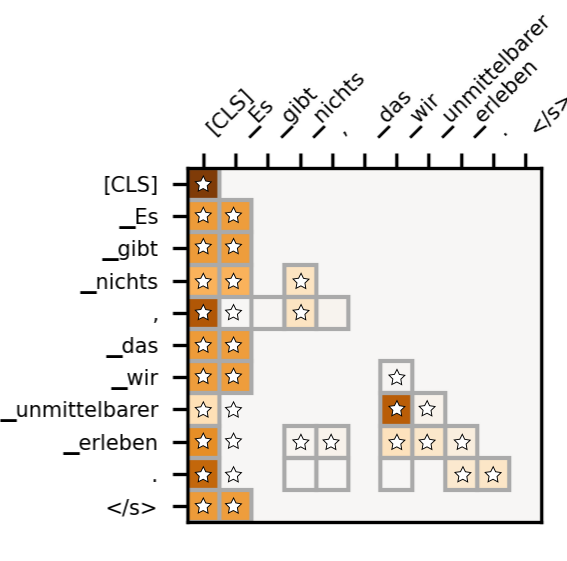}
    \caption{Learned patterns by Sparsefinder $k$-means (left) and the subsequent attention weights (right). Starred blocks represent ground-truth edges.} \label{fig:ex2_attention_map_supp}
\end{figure}

\section{Efficient Sparsefinder} \label{sec:efficient_sparsefinder_supp}

Plots for block size within \{1,2,4,8,16\} are shown in \ref{fig:sparsefinder_vs_bigbird_time_all_block_sizes}. For these experiments, we used a window size of 3 for all methods in order to better measure the impact of others hyper-parameters.

\begin{figure*}
    \centering
    \includegraphics[width=1\textwidth]{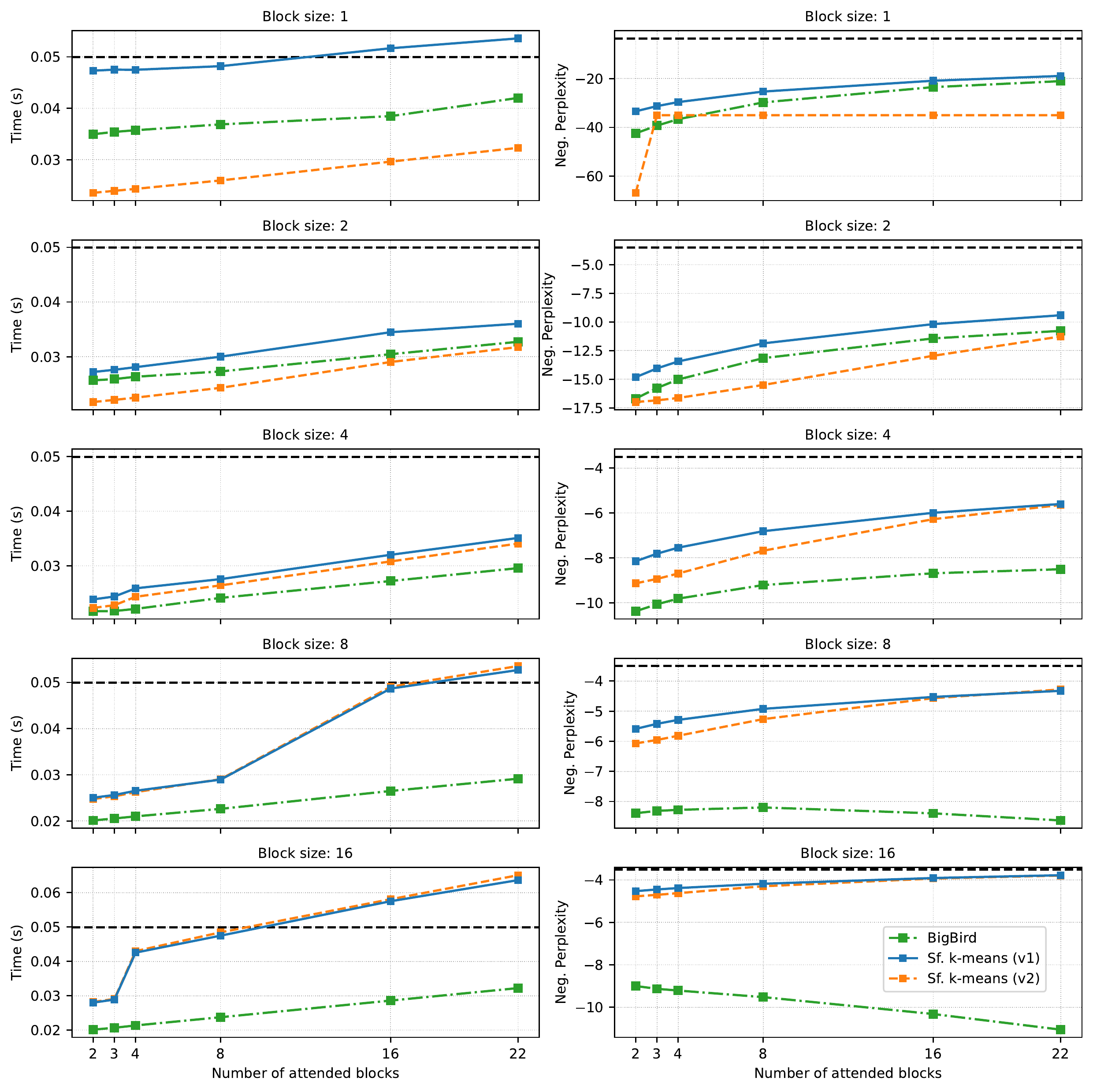}
    \caption{Comparison between Sparsefinder and BigBird in terms of running time and (negative) perplexity as a function of the number of random blocks for several block sizes. The horizontal dashed line represents the results obtained by the full $\alpha$-entmax transformer.}
    \label{fig:sparsefinder_vs_bigbird_time_all_block_sizes}
\end{figure*}

\end{document}